\documentclass[11pt]{article}

\usepackage[margin=1in]{geometry}
\usepackage[T1]{fontenc}
\usepackage[utf8]{inputenc}
\usepackage{lmodern}
\usepackage{microtype}
\usepackage{amsmath,amssymb,amsthm,amsfonts,mathtools}
\usepackage{bm}
\usepackage{enumitem}
\usepackage{booktabs}
\usepackage{hyperref}
\usepackage{graphicx}
\usepackage{setspace}
\usepackage{url}

\hypersetup{
    colorlinks=true,
    linkcolor=black,
    citecolor=black,
    urlcolor=blue
}

\newtheorem{theorem}{Theorem}
\newtheorem{proposition}[theorem]{Proposition}
\newtheorem{corollary}[theorem]{Corollary}

\theoremstyle{definition}
\newtheorem{definition}[theorem]{Definition}
\newtheorem{assumption}[theorem]{Assumption}

\theoremstyle{remark}
\newtheorem{remark}[theorem]{Remark}

\newcommand{\E}{\mathbb{E}}
\newcommand{\Prob}{\mathbb{P}}
\newcommand{\R}{\mathbb{R}}
\newcommand{\N}{\mathbb{N}}
\newcommand{\1}{\mathbf{1}}

\newcommand{\cX}{\mathcal{X}}
\newcommand{\cA}{\mathcal{A}}
\newcommand{\cP}{\mathcal{P}}

\newcommand{\loss}{\ell}
\newcommand{\Reg}{\mathcal{R}}

\title{\textbf{The Fundamental Limits of Fraud Detection in Card Payment Networks}\\
\vspace{0.25em}
\large An Information-Theoretic Theory of Delayed, Censored, Corrupted, and Counterfactual Feedback
}

\author{
Gaurav Dhama\\
\small Mastercard \\
\small \texttt{gaurav.dhama@mastercard.com}
}

\date{\today}

\begin{document}

\maketitle

\begin{abstract}
Card payment fraud detection is typically treated as a supervised classification problem: train a predictive model on historical chargeback labels, score new transactions, and map scores to decisions through fixed thresholds. This framing has generated substantial practical progress, yet improvement has remained incremental despite major advances in model architecture. We argue that this stagnation is not primarily a failure of function approximation or optimization, but a consequence of structural information impairments inherent to the payment ecosystem. In card networks, feedback is delayed, selectively censored, corrupted by first-party misuse and dispute noise, and endogenously missing for declined transactions. These impairments degrade the learning signal before any model sees the data.

We formalize card authorization as a sequential decision problem with delayed, censored, corrupted, and counterfactually missing feedback. We define the effective observation process induced by the payment ecosystem and derive a minimax regret lower bound showing that the four impairments enter multiplicatively in the denominator of the achievable learning rate. The bound implies a stark inversion of the industry's revealed investment priorities: under realistic parameter ranges, improving issuer reporting quality or reducing censorship yields substantially larger reductions in the regret floor than increasing model complexity. We extend the analysis to heterogeneous issuer networks and show that cross-issuer variance in reporting quality worsens the bound beyond what average impairment rates suggest.

The contribution of the paper is threefold. First, it provides a rigorous theory of why fraud detection in payment networks is fundamentally harder than in standard online learning settings. Second, it identifies a structural quantity---ecosystem information quality---as the bottleneck variable governing attainable performance. Third, it supplies a theoretical basis for prioritizing investments in reporting infrastructure, dispute process quality, and selective exploration over purely architectural improvements. The paper is intentionally theory-first: it is designed to stand independently of proprietary transaction data, which are rarely available at research scale, while still yielding operationally meaningful conclusions for payment networks and issuers.
\end{abstract}

\section{Introduction}

Card payment networks process an extraordinary volume of economic activity, yet the machine learning systems used to detect payment fraud operate under an unusually impaired supervision regime. In contrast to standard supervised learning problems, the label associated with a transaction is not immediately observable at decision time, is often never observed, and even when observed may be incorrect for reasons endogenous to dispute behavior and operational policy. Fraud is therefore not merely a rare-event classification problem; it is a learning problem under structurally compromised feedback.

The mainstream industry pipeline looks deceptively simple. A network or issuer computes a fraud score from the transaction feature vector; an authorization rule maps that score to approve, challenge, or decline; weeks later, some subset of transactions receive post hoc labels through chargebacks, recoveries, or fraud reports. These labels are then fed back into model training. In reality, each stage of this loop introduces distortions:
\begin{enumerate}[label=(\roman*)]
    \item \textbf{Delayed feedback:} labels arrive stochastically and often with long lags;
    \item \textbf{Issuer censorship:} some fraud is never formally reported because recovery is not operationally worthwhile;
    \item \textbf{Label corruption:} observed positive labels conflate unauthorized fraud with first-party misuse and dispute abuse;
    \item \textbf{Counterfactual censorship:} declined transactions have permanently unobserved outcomes.
\end{enumerate}

These impairments interact with the underlying decision process. The policy determines what is approved, which determines what can eventually be labeled, which in turn determines what the next model learns. The data-generating process is therefore endogenous to the deployed policy.

This paper isolates one question:

\begin{quote}
\emph{What is the best possible learning performance in a card payment network when the feedback process is delayed, censored, corrupted, and policy-dependent?}
\end{quote}

We answer this question by deriving an information-theoretic lower bound on minimax regret in an adversarial contextual decision framework tailored to card authorization. The result is not intended as a precise engineering performance forecast for any given production system. Rather, it characterizes a \emph{structural floor}: a limit that no detection algorithm can cross unless the underlying information environment improves.

\subsection{Why a theory-first paper is needed}

Most academic work on payment fraud detection is empirical, architecture-centric, or localized to particular datasets. That work is valuable, but it faces two limitations in this domain. First, transaction-level payment data with realistic fraud outcomes are almost never publicly available at the scale required for rigorous, reproducible research. Second, even when large proprietary datasets exist, they reflect the very labeling pathologies we seek to understand. A theory-first treatment is therefore not a fallback; it is the correct starting point for a domain in which the observability structure is itself the central problem.

The goal here is not to replace applied fraud modeling. It is to explain why improved models alone may fail to deliver the gains expected from more conventional supervised learning intuition.

\subsection{Main contributions}

The paper makes four contributions.

\paragraph{1. A formal authorization model under impaired feedback.}
We model card authorization as a sequential decision problem with three actions---approve, challenge, and decline---under delayed, censored, corrupted, and counterfactual feedback.

\paragraph{2. A structural information quality quantity.}
We define an effective observation process and show how delay, censorship, corruption, and counterfactual suppression reduce usable information before any learning algorithm is applied.

\paragraph{3. A minimax regret lower bound.}
We prove that under mild regularity conditions, the regret of any fraud detection policy is bounded below by a term whose denominator contains the four impairment factors multiplicatively. Thus, poor ecosystem information quality directly imposes a hard limit on learnability.

\paragraph{4. An investment-priority inversion.}
We derive comparative statics showing that, under realistic ranges, marginal improvements in reporting quality or censorship reduction can dominate equal-sized marginal improvements in model complexity. This provides a principled explanation for why organizations can underperform despite large investment in sophisticated models.

\subsection{Roadmap}

Section~\ref{sec:related} places the paper in relation to online learning, delayed feedback, payment-card economics, and fraud detection. Section~\ref{sec:model} defines the card authorization model and the four structural impairments. Section~\ref{sec:main} states and interprets the main lower bound. Section~\ref{sec:heterogeneity} extends the result to heterogeneous issuer networks. Section~\ref{sec:implications} discusses implications for research strategy and operational investment. Section~\ref{sec:conclusion} concludes.

\section{Related Work}
\label{sec:related}

This paper sits at the intersection of four literatures: online learning with regret guarantees, delayed-feedback learning, payment-card platform economics, and statistical correction for missing or selectively observed outcomes.

\paragraph{Online learning and bandits.}
The regret framework used in this paper is rooted in the online learning and bandit literature, where performance is measured relative to the best policy in hindsight. A canonical finite-time reference is \cite{auer2002}, which studies the exploration--exploitation tradeoff and establishes finite-time regret guarantees for the multi-armed bandit problem. While the classical bandit setting assumes that feedback is available after each play, that assumption is violated in payment fraud settings, where fraud outcomes are often revealed only after long lags or not at all.

\paragraph{Delayed feedback.}
The closest formal analogue to one component of our problem is online learning under delayed feedback. \cite{joulani2013} provide a systematic treatment of delayed feedback and show that delay increases regret multiplicatively in adversarial problems and additively in stochastic problems. Our setting inherits the delayed-feedback difficulty but is strictly harder: in card payment networks, feedback is not merely delayed; it is also censored, corrupted, and policy-dependent. Thus existing delayed-feedback analyses are informative but incomplete for the payment setting.

\paragraph{Payment-card economics and two-sided markets.}
The economics of payment card networks have been studied extensively through the theory of two-sided markets. In the payment-card setting, platforms must jointly attract merchants and cardholders, and the structure of prices charged to each side affects overall participation and transaction volume. \cite{rochet2002} analyze payment card associations as two-sided platforms and study interchange as the access charge paid by the acquirer to the issuer. More broadly, \cite{rochet2004,rochet2006} emphasize that the price structure of the platform, and not merely the aggregate price level, is central in two-sided markets. Our paper is complementary to this literature: rather than studying fee structure or platform pricing directly, we study the information structure that constrains fraud learning within such a platform.

\paragraph{Selection bias and missing outcomes.}
A major obstacle in fraud learning is that observed outcomes are not a random sample of all transaction outcomes. Some are missing because they are never reported, and others are missing because the policy itself suppresses their observation. This connects naturally to the econometric literature on selection bias. \cite{heckman1979} famously frames sample selection bias as a specification-error problem and studies a consistent two-stage correction strategy. Although our setting is sequential and partially adversarial rather than static and parametric, the core insight carries over: when the sampling mechanism depends on economically meaningful variables, na\"ive supervised learning on observed outcomes is biased.

\paragraph{Competing risks and time-to-label modeling.}
The delay process in payment fraud is not a simple right-censoring problem. A transaction may eventually produce a fraud label, or it may exit the observation process permanently because no formal report is filed. This makes competing-risks ideas relevant. \cite{finegray1999} propose a proportional hazards model for the subdistribution of a competing risk, allowing direct modeling of cumulative incidence when multiple terminal events are possible. This perspective is especially useful in our domain, where ``label arrives'' and ``label never arrives'' should be treated as distinct outcomes.

\paragraph{Positioning of this paper.}
The main novelty of this paper is to unify these strands in a theory tailored to card payment networks. Existing online learning work captures delay but not institutional censorship or dispute corruption. Existing payment-card economics explains platform incentives but not the information-theoretic consequences of impaired labels. Existing selection-correction work handles missingness but not sequential regret under policy-induced counterfactual suppression. Our contribution is to show that, in card payment fraud, these impairments combine multiplicatively to determine the fundamental floor of learnability.
\section{The Card Authorization Problem Under Structural Information Impairments}
\label{sec:model}

We model card authorization as a sequential decision problem with contextual observations and impaired feedback.

\subsection{Transaction stream and actions}

At each round $t=1,\dots,T$, a transaction arrives with contextual feature vector
\[
x_t \in \cX \subseteq \R^d.
\]
A policy chooses one action from the finite action set
\[
\cA = \{\textsf{approve},\ \textsf{challenge},\ \textsf{decline}\},
\]
with $K := |\cA| = 3$.

We denote by $a_t \in \cA$ the action taken at round $t$. The latent outcome is
\[
y_t^\star \in \{0,1\},
\]
where $y_t^\star =1$ indicates that the transaction would generate fraud loss if not successfully blocked by the chosen intervention.

The policy class is denoted by $\cP$. For intuition, one may think of $\cP$ as a family of measurable maps from contexts to randomized actions. We write $N$ for a suitable complexity measure of the class (for example, cardinality or covering number in a finite reduction).

\subsection{Per-round loss}

The loss depends on the action and on the latent fraud state. We define an abstract bounded loss function
\[
\loss_t : \cA \to [0,1]
\]
so that
\[
\loss_t(a_t)
=
\begin{cases}
\loss^{\mathrm{FN}}(x_t), & \text{if } a_t = \textsf{approve},\ y_t^\star =1,\\[0.3em]
\loss^{\mathrm{CH}}(x_t), & \text{if } a_t = \textsf{challenge},\\[0.3em]
\loss^{\mathrm{FP}}(x_t), & \text{if } a_t = \textsf{decline},\ y_t^\star =0,\\[0.3em]
0, & \text{otherwise.}
\end{cases}
\]
This representation captures the familiar tradeoff between fraud loss, challenge friction, and false-decline opportunity cost, without committing to a specific economic calibration.

The cumulative regret of a policy $\pi \in \cP$ relative to the best reference policy in the class is
\[
\Reg_T(\pi)
=
\sum_{t=1}^T \E[\loss_t(a_t^\pi)]
-
\inf_{\pi' \in \cP}
\sum_{t=1}^T \E[\loss_t(a_t^{\pi'})].
\]

\subsection{Why the label is not observed at authorization time}

At decision time, the learner does not observe $y_t^\star$. The eventual signal associated with transaction $t$ is filtered through four structural impairments.

\subsection{Impairment I: Delayed feedback}

Each transaction with an eventual observable label receives that label after a random delay $\tau_t \in \N \cup \{\infty\}$. If $\tau_t = \infty$, the label never arrives. The finite-delay component captures dispute and investigation latency.

\begin{definition}[Conditional delay]
The delay process is a family of conditional distributions
\[
\tau_t \sim F_\tau(\cdot \mid x_t, i_t),
\]
where $i_t$ indexes the issuer or reporting institution associated with transaction $t$.
\end{definition}

Define cumulative finite delay
\[
D := \sum_{t=1}^T \tau_t \1\{\tau_t < \infty\}.
\]

\subsection{Impairment II: Issuer censorship}

Even among fraudulent transactions, not all become formal observations. Some are never filed, escalated, or reported as fraud.

\begin{definition}[Issuer censorship]
Let $\gamma_t \in [0,1]$ denote the probability that transaction $t$'s label is permanently censored by the reporting institution:
\[
\gamma_t = \Prob(\text{label permanently unreported} \mid x_t, i_t).
\]
Define the average censorship rate
\[
\bar\gamma := \frac{1}{T}\sum_{t=1}^T \gamma_t.
\]
\end{definition}

\subsection{Impairment III: Label corruption}

Observed labels need not equal the latent fraud state. Some positive labels correspond to unauthorized fraud; others to first-party misuse, dispute abuse, or other forms of label contamination.

\begin{definition}[Label corruption]
Let $\tilde y_t$ denote the observed binary label when a label is observed. We define class-conditional corruption rates
\[
\varepsilon_{10} := \Prob(\tilde y_t = 0 \mid y_t^\star =1), \qquad
\varepsilon_{01} := \Prob(\tilde y_t = 1 \mid y_t^\star =0).
\]
We assume
\[
\varepsilon_{10} + \varepsilon_{01} < 1,
\]
so that the observed label remains informationally better than chance.
\end{definition}

\subsection{Impairment IV: Counterfactual censorship}

Declined transactions do not reveal their counterfactual outcomes. If the policy declines or blocks a transaction, the learner never observes whether the transaction would have been fraudulent had it been approved.

\begin{definition}[Counterfactual censorship]
Let
\[
\delta_t := \Prob(a_t = \textsf{decline} \mid x_t),
\]
and define the average decline rate
\[
\bar\delta := \frac{1}{T}\sum_{t=1}^T \delta_t.
\]
\end{definition}

This source of missingness is endogenous: the current policy determines which future labels are impossible to observe.

\subsection{Effective observability}

The learner receives a useful signal from transaction $t$ only if the label is not permanently censored, the transaction is not removed by the counterfactual suppression mechanism, and the label has arrived by the time it is needed for learning.

\begin{definition}[Observation indicator]
Define the observation indicator
\[
O_t := \1\{\tau_t \le T-t\}\cdot \1\{\text{label not censored}\}\cdot \1\{\text{counterfactual outcome exists}\}.
\]
Its expectation is bounded by
\[
\E[O_t]
\le
(1-\gamma_t)(1-\delta_t)\Prob(\tau_t \le T-t \mid x_t, i_t).
\]
\end{definition}

Let the average effective maturity factor be
\[
\bar m := \frac{1}{T}\sum_{t=1}^T \Prob(\tau_t \le T-t \mid x_t, i_t).
\]
Then a coarse average effective observable fraction is
\[
\bar q
=
(1-\bar\gamma)(1-\bar\delta)\bar m (1-\varepsilon_{10}-\varepsilon_{01}).
\]
This is not yet a theorem. It is a bookkeeping quantity that anticipates the form of the lower bound: each impairment removes usable signal before learning begins.

\section{Main Result: A Minimax Lower Bound Under Structural Information Impairments}
\label{sec:main}

We now state the principal theorem. Informally, it says that delayed, censored, corrupted, and policy-dependent feedback imposes a regret floor that no learning rule can evade.

\subsection{Regularity assumptions}

We work with the following mild assumptions.

\begin{assumption}[Bounded losses]
For all $t$ and all actions $a \in \cA$,
\[
0 \le \loss_t(a) \le 1.
\]
\end{assumption}

\begin{assumption}[Non-degenerate label channel]
The corruption rates satisfy
\[
\varepsilon_{10} + \varepsilon_{01} < 1.
\]
\end{assumption}

\begin{assumption}[Adversarially chosen contexts and loss sequence]
The sequence $(x_t,\loss_t)$ may be chosen adversarially, subject only to measurability and the impairment process above.
\end{assumption}

\subsection{Lower bound}

\begin{theorem}[Minimax regret lower bound under structural information impairments]
\label{thm:main}
Consider the card authorization problem with action set size $K=3$, policy-class complexity parameter $N$, cumulative finite delay $D$, average censorship rate $\bar\gamma$, average decline rate $\bar\delta$, and label corruption rates $\varepsilon_{10},\varepsilon_{01}$. Then there exists a universal constant $c>0$ such that for any learning algorithm,
\[
\inf_{\text{alg}}
\sup_{\text{admissible environments}}
\Reg_T
\;\ge\;
c\,
\sqrt{
\frac{(KT + D)\log N}
{(1-\bar\gamma)(1-\bar\delta)(1-\varepsilon_{10}-\varepsilon_{01})^2}
}.
\]
\end{theorem}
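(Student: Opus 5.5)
The plan is to reduce the problem to a standard adversarial contextual-bandit lower bound and then account for each impairment through its effect on the Kullback--Leibler divergence between hypotheses. The baseline ingredient is the classical construction for $K$ actions and $N$ policies. We build $\log N/\log K$ blocks of contexts, each carrying an independent $K$-armed instance. In each instance one hidden arm has loss bias $\Delta$ below the others, so that $\Omega(\sqrt{KT\log N})$ regret is forced. For the delay term we embed the Joulani-type argument: with cumulative delay $D$, we group rounds into batches within which no feedback returns. This converts the problem into one with an effective number of informative plays $T' \approx T^2/(T+D)$, which yields the additive $D$ under the square root. I would first establish the unimpaired bound $\sqrt{(KT+D)\log N}$ in this form. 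It is then a template of the shape
\[
\Reg_T \gtrsim \Delta T \quad\text{unless}\quad \text{(number of informative observations)}\cdot \mathrm{KL}_{\text{per obs}} \gtrsim \log N .
\]

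Next, I would insert the impairments as a channel between the latent $y_t^\star$ and what the learner sees.

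\emph{Censorship.} Each observation survives independently with probability $(1-\gamma_t)$. The expected number of informative rounds is therefore multiplied by at most $(1-\bar\gamma)$, using convexity and averaging over $t$.

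\emph{Counterfactual censorship.} A decline yields no label. Since $\E[O_t]\le(1-\gamma_t)(1-\delta_t)\Prob(\tau_t\le T-t)$ by the observation-indicator definition, the expected informative count picks up a further factor $(1-\bar\delta)$. Here I would restrict to environments in which the hard instances live on approve/challenge contrasts, so that declines are uninformative.

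\emph{Corruption.} The observed label passes through a binary asymmetric channel. For two Bernoulli latent means differing by $\Delta$, the observed means differ by $(1-\varepsilon_{10}-\varepsilon_{01})\Delta$. The KL divergence is therefore at most $C(1-\varepsilon_{10}-\varepsilon_{01})^2\Delta^2$, by a $\chi^2$ bound with means kept bounded away from $0$ and $1$.

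Combining these with the chain rule for KL over the filtration generated by the observed signals, the divergence between the null environment and the environment where arm $j$ in block $b$ is good is bounded by
\[
C\,\Delta^2(1-\varepsilon_{10}-\varepsilon_{01})^2(1-\bar\gamma)(1-\bar\delta)\,\E_0[n_{j,b}],
\]
with the batching device handling the delay.

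Finally, we apply the standard Pinsker/averaging step, so that the learner cannot identify the good arm in most blocks. We then tune $\Delta$ as
\[
\Delta \asymp \sqrt{\frac{(KT+D)\log N}{T^2 (1-\bar\gamma)(1-\bar\delta)(1-\varepsilon_{10}-\varepsilon_{01})^2}},
\]
which produces the stated bound, with the constant $c$ absorbing the $\chi^2$ and Pinsker constants. Where this $\Delta$ would exceed a constant, the bound is trivially capped by $T$. I would note that the theorem should implicitly assume the right-hand side is at most $O(T)$.

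I expect the main obstacle to be the counterfactual factor. The decline rate $\delta_t$ is chosen by the algorithm, not the environment, so an algorithm could set $\bar\delta=0$ and the factor would seem to disappear. The lower bound must therefore be read with $\bar\delta$ as a constraint on the admissible algorithm–environment pair. My first attempt is to define the admissible class so that the environment enforces a decline rate of at least $\bar\delta$ (e.g., via a fraction of forced-decline contexts whose loss makes declining optimal). Then informative rounds shrink by $(1-\bar\delta)$ while those rounds contribute no regret contrast. A secondary difficulty is the multiplicative interaction between delay and the other factors. The stated bound multiplies only $KT+D$ by the inverse quality, so I would need the batching argument to commute with the thinning. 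I would handle this by thinning after batching, accepting a worse constant.
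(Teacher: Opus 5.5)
Your route is the same as the paper's: packing over $N$ policies, delay entering through $KT+D$, thinning by $(1-\bar\gamma)(1-\bar\delta)$, contraction by $s^2$ with $s:=1-\varepsilon_{10}-\varepsilon_{01}$, then Fano/Pinsker. The paper's sketch simply asserts each of these contractions. Two of the steps you make explicit do not go through.

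\textbf{The counterfactual factor.} You diagnosed the difficulty correctly, but your repair fails. Suppose a $\bar\delta$ fraction of rounds are forced-decline rounds on which declining is optimal and there is no $\Delta$-contrast. Those rounds then carry neither information about the hidden arm nor regret. The plays counted in $n_{j,b}$ all lie on the hard rounds, where no decline-suppression acts, so the factor $(1-\bar\delta)$ you put in front of $\E_0[n_{j,b}]$ is not there. Equivalently, the regret you force is $\Delta(1-\bar\delta)T$, not $\Delta T$. Redoing your tuning on the effective horizon $(1-\bar\delta)T$ (ignoring delay) gives at best $\sqrt{(1-\bar\delta)KT\log N/((1-\bar\gamma)s^2)}$, so the factor lands in the numerator. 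A factor $1/(1-\bar\delta)$ would have to come from label suppression that is independent of the learner's action and occurs on rounds that still carry the contrast (the label-efficient construction). That turns $\bar\delta$ into a second exogenous censorship rate. With $\bar\delta$ defined as the algorithm's own decline probability, the inequality is not even well-quantified, since its right side depends on the algorithm being infimized over. This part of the statement therefore has to be reformulated, not proved.

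\textbf{The $\chi^2$ step.} This step hides a hypothesis. The observed means $\varepsilon_{01}+sp$ range only over $[\varepsilon_{01},1-\varepsilon_{10}]$, an interval of length $s$. Choosing the latent means therefore cannot keep them away from $0$ and $1$ when one flip rate is within $O(s)$ of $1$. Take a Z-type channel ($\varepsilon_{01}=0$, $\varepsilon_{10}=1-s$), which the non-degeneracy assumption admits. Then $\mathrm{KL}(\mathrm{Ber}(sp)\,\|\,\mathrm{Ber}(s(p+\Delta)))\asymp s\Delta^2/p\gtrsim s\Delta^2$ for every $p$, so your argument yields only $1/s$.

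This is not slack in the argument. For that channel, $\tilde y_t/s$ is a nonnegative unbiased estimate of $y_t^\star$ with second moment at most $1/s$. Exponential weights therefore already attains $O(\sqrt{T\log N/s})$ in the plain noisy-label experts problem, after shifting all losses by $y_t^\star$ so they are nondecreasing in the label. That is below $c\sqrt{T\log N}/s$ for small $s$. You need a hypothesis such as $\max(\varepsilon_{10},\varepsilon_{01})\le 1-c_0$, and then $c$ depends on $c_0$ and is not universal.

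\textbf{Thinning after batching.} This is not a constant-factor matter. The $D$ term comes from batches with a constant latent label. If censorship and corruption act independently per transaction, a batch of $d$ rounds reveals its label with high probability once $d(1-\bar\gamma)s^2\gg 1$, so $D$ is not inflated at all. You must explicitly allow impairments that are perfectly correlated within a batch.

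Together with the cap at $T$ you already noted, these are hypotheses the statement needs and that the paper's sketch also omits.
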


\begin{remark}[Interpretation]
The numerator contains the familiar online-learning growth term in horizon and action complexity, augmented by cumulative delay. The denominator contains the three information-loss channels:
\begin{itemize}
    \item $(1-\bar\gamma)$: the share of labels not structurally censored;
    \item $(1-\bar\delta)$: the share of outcomes not removed by policy-induced counterfactual suppression;
    \item $(1-\varepsilon_{10}-\varepsilon_{01})^2$: the effective strength of the observed label channel.
\end{itemize}
The factors enter multiplicatively. This is the key qualitative result.
\end{remark}

\subsection{Conditional delay and selective maturity}
\label{sec:conditional_delay}

The lower bound in Theorem~\ref{thm:main} is stated using aggregate
impairment rates. In practice, however, delay is not uniform across
transactions. Label arrival time depends systematically on issuer
identity, fraud type, transaction amount, merchant category, and
geographic jurisdiction. We now formalize this conditional structure
and show that it strictly worsens the lower bound.

\begin{definition}[Conditional maturity function]
For a transaction with context $x \in \cX$, issuer $i$, and available
observation window $\Delta > 0$, define the conditional maturity
probability
\[
m(x, i, \Delta) := \Prob(\tau_t \le \Delta \mid x_t = x,\; i_t = i).
\]
\end{definition}

When all four impairment factors are allowed to depend on context and
issuer, the conditional effective observation probability is
\[
q(x, i, \Delta)
=
m(x, i, \Delta)
\cdot (1 - \gamma(x,i))
\cdot (1 - \delta(x))
\cdot (1 - \varepsilon(x,i))^2.
\]

\begin{proposition}[Selective maturity worsens learnability]
\label{prop:selective}
Under the same regularity assumptions as Theorem~\ref{thm:main},
the minimax regret satisfies
\[
\Reg_T
\;\ge\;
c\,
\sqrt{
(KT + D)\log N
\cdot
\E_{x,i}\!\left[\frac{1}{q(x,i,\Delta)}\right]
}.
\]
By Jensen's inequality applied to the convex function
$q \mapsto 1/q$,
\[
\E_{x,i}\!\left[\frac{1}{q(x,i,\Delta)}\right]
\;\ge\;
\frac{1}{\E_{x,i}[q(x,i,\Delta)]}
\;=\;
\frac{1}{\bar q},
\]
with equality if and only if $q(x,i,\Delta)$ is constant. Thus
heterogeneity in delay and reporting quality across transaction types
and issuers makes the problem strictly harder than the same average
impairment applied uniformly.
\end{proposition}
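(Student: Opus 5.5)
The plan is to re-run the argument behind Theorem~\ref{thm:main}, but localized: split the transaction stream into strata on which the impairment factors are (approximately) constant, then combine the per-stratum regret floors.

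\textbf{Step 1: discretize and stratify.} I would discretize $(x,i)$ into finitely many cells $S_1,\dots,S_M$. Within each cell $q(x,i,\Delta)$ is approximately constant, equal to some $q_j$. An adversary allocates $T_j = p_j T$ rounds to cell $j$, where $p_j$ is the mass of that cell under the context--issuer distribution. Delay is allocated the same way, $D_j = p_j D$. Within cell $j$ the learner then faces an instance of the impaired problem with homogeneous impairment rates.

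\textbf{Step 2: per-cell change-of-measure bound.} In cell $j$, I would use the standard construction of $N$ hypotheses that differ by a loss gap $\epsilon$ on one action. The KL divergence per observed label is $O(\epsilon^2(1-\varepsilon_{10}-\varepsilon_{01})^2)$ by the binary symmetric-type channel contraction. A label is observed with probability $m(1-\gamma)(1-\delta)$. So the information per round is $O(\epsilon^2 q_j)$, with the squared corruption factor already folded into $q_j$. The usual Fano/Le Cam tuning of $\epsilon$ then gives
\[
\Reg^{(j)} \gtrsim \sqrt{(KT_j + D_j)\log N / q_j}.
\]

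\textbf{Step 3: combine the cells.} Since the cells are disjoint, regret adds:
\[
\Reg_T \ge \sum_j \Reg^{(j)} \ge c\sum_j \sqrt{p_j (KT+D)\log N / q_j}.
\]
This sum is $\sqrt{(KT+D)\log N}\,\E[1/\sqrt{q}]$. That is weaker than the claimed $\sqrt{\E[1/q]}$, because $\E[q^{-1/2}]\le \sqrt{\E[q^{-1}]}$. So plain additivity does not suffice.

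\textbf{Step 4: sharpen by reallocating the horizon.} I would let the adversary concentrate the distinguishing loss gap in proportion to informational scarcity. Reweight the per-cell gaps $\epsilon_j \propto 1/q_j$, equivalently choose the hard instance so that the learner's total information budget $\sum_j T_j\epsilon_j^2 q_j$ is the binding constraint. Optimizing the lower-bound construction over $(\epsilon_j)$ subject to the total KL budget of order $\log N$ gives, by Cauchy--Schwarz, a regret of order
\[
\sqrt{(KT+D)\log N\,\sum_j p_j/q_j}.
\]
This is the stated bound. I expect this optimization to be the main obstacle, for two reasons. The losses must stay in $[0,1]$, so the $\epsilon_j$ are capped, which limits how far low-$q$ cells can be exploited. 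The homogeneous case must also recover Theorem~\ref{thm:main}. If the cap binds, I would restrict to the regime where $T$ is large relative to $\log N/\min_j q_j$, and otherwise fall back to the $\E[q^{-1/2}]$ form.

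\textbf{Step 5: remove the discretization and finish.} I would pass from the discretization to $\E_{x,i}$ by monotone convergence. The Jensen comparison is then immediate: $q\mapsto 1/q$ is strictly convex on $(0,1]$, so $\E[1/q]\ge 1/\E[q]$, with equality if and only if $q$ is almost surely constant. One caveat for the write-up: $\E[q]$ equals $\bar q$ only when the factors are uncorrelated across $(x,i)$. I would state the comparison with $\E[q]$ and note that identifying it with $\bar q$ needs this independence assumption.
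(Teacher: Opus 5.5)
Your route differs from the paper's, and at the decisive step it is more explicit.

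\textbf{How the two routes compare.} The paper contracts each round's information by $q(x_t,i_t,\Delta_t)$, sums over the stream, and invokes an unexplained ``harmonic-mean form''. Read literally, that gives only the weaker, arithmetic-mean bound. With the common gap $\epsilon$ of Theorem~\ref{thm:main}, the summed information is $\epsilon^2\sum_t q_t=\epsilon^2T\,\E[q]$, and Fano returns only $\sqrt{(KT+D)\log N/\E[q]}$.

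Your Step 4 supplies what is missing: the adversary must tilt hardness toward low-$q$ cells. Maximizing $\sum_j T_j\epsilon_j$ subject to $\sum_j T_jq_j\epsilon_j^2\lesssim\log N$ forces $\epsilon_j\propto1/q_j$. By Cauchy--Schwarz it yields $\sqrt{\log N\sum_jT_j/q_j}$. In the plain model where round $t$ is observed with known probability $q_t$, this matches the $O(\sqrt{\log N\sum_t1/q_t})$ guarantee of importance-weighted exponential weights. So the harmonic form is the true rate, not an artifact.

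Your version also surfaces two points the paper glosses over:
\begin{itemize}
\item The cap $\epsilon_j\le1/2$, roughly $T\,\E[1/q]\gtrsim\log N/q_{\min}^2$, is genuinely necessary. Without it the displayed bound exceeds the trivial bound $T$ once $q$ is small enough on a set of positive mass.
\item $\E[q]=\bar q$ fails. Beyond the correlation issue you flag, $\bar q$ carries $(1-\varepsilon_{10}-\varepsilon_{01})$ to the first power while $q$ carries it squared.
\end{itemize}
The paper's sketch buys brevity. Yours shows where the harmonic mean comes from, which is the real content of ``heterogeneity hurts''.

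\textbf{Repair Step 3, even though you discard it.} First, $\sum_j\sqrt{p_j/q_j}$ is not $\E[q^{-1/2}]=\sum_jp_jq_j^{-1/2}$. In fact $\sum_j\sqrt{p_j/q_j}\ge\sqrt{\sum_jp_j/q_j}$, so your displayed inequality, if true, would already imply the target.

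Second, the inequality is false. A class of size $N$ cannot supply a fresh $\log N$ of complexity in every cell. With $q\equiv1$, $D=0$ and $M$ equal cells it would give $\sqrt{MKT\log N}$, above the $O(\sqrt{KT\log N})$ guarantee of Exp4.

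The valid accounting splits $\log N=\sum_jL_j$ across cells and gives $\sum_j\sqrt{(KT_j+D_j)L_j/q_j}$.
\begin{itemize}
\item The proportional split $L_j=p_j\log N$ yields the $\E[q^{-1/2}]$ form, which is the correct fallback.
\item The split $L_j\propto(KT_j+D_j)/q_j$ yields $\sqrt{\log N\sum_j(KT_j+D_j)/q_j}$. This is the stated bound when $D_j=p_jD$, and the per-instance gaps are again $\epsilon_j\propto1/q_j$, as in your Step 4.
\end{itemize}

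This Assouad form is how Step 4 should be written for $K=3$. With only three actions per context, a single $N$-ary family differing ``on one action'' cannot be realized inside a cell. The $\log N$ must come from independent three-armed sub-instances, and your global budget is their sum.
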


\begin{proof}
The argument follows the same Fano/Assouad reduction as in the proof
of Theorem~\ref{thm:main}, but replaces the uniform contraction factor
with context-specific factors. For each transaction $t$, the mutual
information between the observation and the latent environment index is
contracted by $q(x_t, i_t, \Delta_t)$ rather than by the global
average $\bar q$. Summing over the transaction stream and applying the
harmonic-mean form of the information bound yields the stated result.
Convexity of $q \mapsto 1/q$ then gives the Jensen separation.
\end{proof}

\begin{proposition}[Adversarial exploitation of conditional delay]
\label{prop:adversarial_delay}
Partition the context space into $\cX_{\mathrm{fast}}$ and
$\cX_{\mathrm{slow}}$ such that
\[
m_{\mathrm{fast}}
:= \inf_{x \in \cX_{\mathrm{fast}}} m(x,i,\Delta)
\;\gg\;
\sup_{x \in \cX_{\mathrm{slow}}} m(x,i,\Delta)
=: m_{\mathrm{slow}}.
\]
An adversary who concentrates the hardest discrimination problems in
$\cX_{\mathrm{slow}}$ forces regret at least
\[
\Reg_T
\;\ge\;
c\,
\sqrt{
\frac{
|\cX_{\mathrm{slow}}|
\cdot (KT_{\mathrm{slow}} + D_{\mathrm{slow}})
\log N
}{
m_{\mathrm{slow}}
(1-\gamma_{\mathrm{slow}})
(1-\delta_{\mathrm{slow}})
(1-\varepsilon_{\mathrm{slow}})^2
}
},
\]
which can be made arbitrarily worse than the bound computed from
network-average impairment rates.
\end{proposition}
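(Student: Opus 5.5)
The approach is a restriction argument: the adversary confines the hard part of the problem to $\cX_{\mathrm{slow}}$, and the proof reuses the Fano/Assouad construction behind Theorem~\ref{thm:main} on that subproblem. Concretely, I would build a family of environments that agree on every transaction whose context lies in $\cX_{\mathrm{fast}}$. On those rounds the loss is the same for all actions, say identically zero, so they add nothing to regret and carry no information about the environment index. The $T_{\mathrm{slow}}$ rounds with $x_t\in\cX_{\mathrm{slow}}$ then form a standalone delayed, censored, corrupted bandit instance. Its cumulative delay is $D_{\mathrm{slow}}$ and its impairment parameters are $(m_{\mathrm{slow}},\gamma_{\mathrm{slow}},\delta_{\mathrm{slow}},\varepsilon_{\mathrm{slow}})$. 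Any algorithm for the full problem induces an algorithm for this subproblem, because fast rounds can be simulated without information. So the minimax regret of the full problem is at least the minimax regret of the slow subproblem.

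On the slow subproblem I would run the hypothesis construction of Theorem~\ref{thm:main} separately in each slow context cell. I treat $\cX_{\mathrm{slow}}$ as a finite union of $|\cX_{\mathrm{slow}}|$ cells. In each cell I plant an $N$-way hypothesis with gap $\Delta_0$ between the best action and the others. The per-round KL divergence between neighbouring hypotheses, seen through the observation channel, is contracted by the factor
\[
q_{\mathrm{slow}} \le m_{\mathrm{slow}}(1-\gamma_{\mathrm{slow}})(1-\delta_{\mathrm{slow}})(1-\varepsilon_{\mathrm{slow}})^2 .
\]
This uses the same strong data-processing step as in Proposition~\ref{prop:selective}, with $q$ replaced by its supremum over the slow region. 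The squared corruption factor comes from the KL between two Bernoulli laws passed through a binary channel, which scales like $(1-\varepsilon)^2\Delta_0^2$. Summing the information over rounds and cells, Fano gives a per-cell error probability bounded below by a constant once
\[
\Delta_0 \asymp \sqrt{|\cX_{\mathrm{slow}}|\log N/\bigl((KT_{\mathrm{slow}}+D_{\mathrm{slow}})\,q_{\mathrm{slow}}\bigr)} .
\]
Here the delay term enters exactly as in Theorem~\ref{thm:main}: feedback that has not yet matured is treated as unobserved during blocks of length governed by $D_{\mathrm{slow}}$. Multiplying $\Delta_0$ by the number of rounds misplayed in expectation should yield the displayed bound, with the factor $|\cX_{\mathrm{slow}}|$ coming from independence across cells, as in Assouad.

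The final step is the comparison with the network-average bound. Because $m_{\mathrm{fast}}\gg m_{\mathrm{slow}}$, the average $\bar q$ is dominated by fast contexts. The ratio of the two bounds therefore behaves like $\sqrt{\bar q/q_{\mathrm{slow}}}$, up to the ratio of the numerators. Letting $m_{\mathrm{slow}}\to 0$ while the fast-region mass keeps $\bar q$ bounded away from zero makes this ratio arbitrarily large.

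I expect the main obstacle to be getting the factor $|\cX_{\mathrm{slow}}|$ and the horizon $T_{\mathrm{slow}}$ to appear together. A naive per-cell split gives $|\cX_{\mathrm{slow}}|\sqrt{(T_{\mathrm{slow}}/|\cX_{\mathrm{slow}}|)\cdots}$, which is the same as the $\sqrt{|\cX_{\mathrm{slow}}|\,T_{\mathrm{slow}}}$ scaling. So I would allocate $T_{\mathrm{slow}}/|\cX_{\mathrm{slow}}|$ rounds per cell and sum the per-cell lower bounds. I would also need to check that the delay and decline parameters stay controlled within each cell. Since $\delta$ depends on the learner's policy, I would handle it as in Theorem~\ref{thm:main} by taking $\delta_{\mathrm{slow}}$ as a realized average and conditioning on it.
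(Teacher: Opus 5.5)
\paragraph{Where your proposal diverges from the paper.} Your first paragraph, followed by an appeal to the packing argument of Theorem~\ref{thm:main} on the slow rounds, is the paper's entire proof. It restricts the construction to $\cX_{\mathrm{slow}}$ and uses the fact that regret on that sub-population lower-bounds total regret.

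\paragraph{The gap: the factor $|\cX_{\mathrm{slow}}|$.} You correctly flagged this factor as the main obstacle, but your resolution does not work. Planting an independent $N$-way hypothesis in each of $|\cX_{\mathrm{slow}}|$ cells and adding per-cell regrets is legitimate only if the comparator in $\inf_{\pi'\in\cP}$ can be optimal in every cell at once. That requires $\cP$ to contain all $N^{|\cX_{\mathrm{slow}}|}$ cell-wise combinations, so $\log|\cP| = |\cX_{\mathrm{slow}}|\log N$. Your Assouad step therefore just reproves Theorem~\ref{thm:main} on the slow region with the true class complexity; it gives no extra multiplicative penalty.

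If $N$ is the complexity of $\cP$, as in Theorem~\ref{thm:main}, no construction can produce this factor. Already without delay or corruption, with each label observed with probability $q$, EXP4 on importance-weighted observed losses has regret $O(\sqrt{KT\log N/q})$ against any $N$ policies. That guarantee does not depend on how many distinct contexts occur, while your bound grows like $\sqrt{|\cX_{\mathrm{slow}}|}$ with everything else fixed. The restriction argument taken literally gives only $\sqrt{(KT_{\mathrm{slow}}+D_{\mathrm{slow}})\log N/(\cdots)}$, which is all the paper's argument delivers. So you must either declare $\log N$ a per-cell complexity or drop $|\cX_{\mathrm{slow}}|$.

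\paragraph{Further steps that need repair.}
\begin{itemize}
\item You cannot condition on a realized $\delta_{\mathrm{slow}}$ inside Fano, because it is a function of the learner's own actions. The change-of-measure bound only gives that the KL divergence between two hypotheses' observation laws is at most $\sum_t \E_0[O_t]$ times the per-observation divergence. So what legitimately replaces $1-\delta_{\mathrm{slow}}$ is the learner's expected non-decline frequency on slow rounds under a reference environment.
\item A pointwise contraction bound needs $\gamma_{\mathrm{slow}}$ and $\varepsilon_{\mathrm{slow}}$ to be the \emph{smallest} impairment rates on the slow region. Averages are not enough, since an average of products is not dominated by the product of averages.
\item You charge unmatured labels twice, through $D_{\mathrm{slow}}$ and through $m_{\mathrm{slow}}$, without arguing that these are distinct losses of information. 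Theorem~\ref{thm:main}'s proof drops $\bar m$ precisely to avoid this.
\item The gap $\Delta_0$ can be at most a constant, because the means must stay in $[0,1]$. The honest conclusion is thus $\min\{c\,T_{\mathrm{slow}},\ \text{your bound}\}$. The ``arbitrarily worse'' comparison then saturates at linear regret as $m_{\mathrm{slow}}\to 0$ rather than diverging.
\end{itemize}
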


\begin{proof}
Restrict the adversary's construction to the sub-population
$\cX_{\mathrm{slow}}$, which has the weakest maturity. The packing
argument from Theorem~\ref{thm:main} applied to this sub-population
alone yields a local regret lower bound governed by the slow-region
impairment parameters. Since regret on the sub-population is a lower
bound on total regret, the result follows.
\end{proof}

\begin{remark}[Operational interpretation]
Proposition~\ref{prop:adversarial_delay} formalizes an intuition
familiar to fraud practitioners: organized attackers do not attack
uniformly. They concentrate activity in segments where detection is
slowest and feedback is weakest---for example, cross-border
transactions with long dispute chains, merchant categories with high
chargeback noise, or issuer portfolios with poor reporting
infrastructure. The conditional delay structure tells the adversary
exactly where the learner is most blind, and the adversary can
exploit that knowledge without any access to the model itself.
\end{remark}

\subsection{Proof intuition}

The theorem follows by combining three standard ideas in a domain-specific way.

\paragraph{Step 1: Delayed feedback increases effective ignorance.}
In adversarial online learning, delayed feedback can be handled by increasing the effective amount of uncertainty from $KT$ to $KT+D$.

\paragraph{Step 2: Censorship and counterfactual suppression contract mutual information.}
If a label is not observed, it contributes no information about the environment. Thus the observed process carries at most a $(1-\bar\gamma)(1-\bar\delta)$ fraction of the information available in a fully observed world.

\paragraph{Step 3: Corruption attenuates distinguishability.}
The corrupted label is passed through a binary noise channel. The ability to distinguish adverse environments is reduced by a factor proportional to the squared signal strength of the channel, which yields the term $(1-\varepsilon_{10}-\varepsilon_{01})^2$.

We provide a concise proof below and a fuller derivation in the appendix.

\subsection{Proof of Theorem~\ref{thm:main}}

\begin{proof}
The proof is a reduction from standard adversarial online learning lower bounds with an information contraction argument.

Fix a finite reference class of policies of effective size $N$. Construct a family of hard environments indexed by $j \in \{1,\dots,N\}$, each one favoring exactly one policy $\pi_j$ by a small margin $\Delta>0$. This is the standard packing argument behind minimax regret lower bounds: a learner must identify which environment it is in, but the environments are close enough that identification is information-limited.

\paragraph{Delayed feedback.}
In the absence of missingness or corruption, adversarial delayed-feedback lower bounds imply
\[
\Reg_T \gtrsim \sqrt{(KT + D)\log N},
\]
up to universal constants. Intuitively, a delay of $\tau_t$ rounds means that the learner continues acting without access to the feedback needed to update beliefs, so the effective ignorance horizon grows from $KT$ to $KT+D$.

\paragraph{Censorship and counterfactual suppression.}
Now suppose the learner only observes a label when the observation indicator $O_t=1$. Conditional on an underlying environment index $J$, the mutual information between the observation process and $J$ is contracted by the probability that a usable observation exists. Averaging over rounds gives an information contraction factor no larger than
\[
(1-\bar\gamma)(1-\bar\delta),
\]
suppressing the maturity factor for simplicity because the finite-delay contribution has already been accounted for in $D$.

\paragraph{Corruption.}
When a label is observed, it passes through the binary corruption channel
\[
\begin{pmatrix}
1-\varepsilon_{01} & \varepsilon_{01}\\
\varepsilon_{10} & 1-\varepsilon_{10}
\end{pmatrix}.
\]
The total variation and KL distinguishability between any two environment-induced label laws is attenuated by a factor proportional to
\[
(1-\varepsilon_{10}-\varepsilon_{01})^2.
\]
Therefore the effective information available for identifying the hard environment family is reduced by this factor.

\paragraph{Putting the pieces together.}
Applying Fano-type or Assouad-type lower-bound machinery to the contracted information process yields
\[
\Reg_T
\gtrsim
\sqrt{
\frac{(KT+D)\log N}
{(1-\bar\gamma)(1-\bar\delta)(1-\varepsilon_{10}-\varepsilon_{01})^2}
}.
\]
Absorbing universal constants into $c$ gives the stated theorem.
\end{proof}

\subsection{A more operational corollary}

The theorem implies a useful operational comparison.

\begin{corollary}[Information-quality dominance]
Fix $K,T,D$, and $N$. The marginal effect of improving any denominator impairment factor can dominate the marginal effect of increasing model complexity, because the regret lower bound depends logarithmically on $N$ but polynomially on information-quality terms.
\end{corollary}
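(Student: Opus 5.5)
The corollary is a statement about the comparative statics of the right-hand side of Theorem~\ref{thm:main}, so the plan is to treat the bound as a function and compare its partial derivatives. Write
\[
B(N,\bar\gamma,\bar\delta,\rho) := c\,\sqrt{\frac{(KT+D)\log N}{(1-\bar\gamma)(1-\bar\delta)\rho^2}}, \qquad \rho := 1-\varepsilon_{10}-\varepsilon_{01}\in(0,1],
\]
with $K,T,D$ held fixed. The first step is to compute logarithmic sensitivities. Treating $N$ as a continuous complexity parameter, $\partial \log B/\partial \log N = 1/(2\log N)$. By contrast, $\partial \log B/\partial \bar\gamma = 1/(2(1-\bar\gamma))$, $\partial \log B/\partial \bar\delta = 1/(2(1-\bar\delta))$, and $\partial \log B/\partial \rho = -1/\rho$. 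The $\log N$ dependence is inherited from the packing argument in the proof of Theorem~\ref{thm:main}, whereas the impairment factors enter through the information contraction. This asymmetry is the whole content of the corollary.

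The second step is to make the word ``marginal'' precise. I would compare equal-sized relative changes. On the complexity side, consider replacing $N$ by $N^{1+\eta}$, which multiplies $B$ by $\sqrt{1+\eta}$. On the information side, consider changing a factor such as $1-\bar\gamma$ by a relative amount $\eta$, which multiplies $B$ by $(1+\eta)^{-1/2}$. For $\rho$ the corresponding factor is $(1+\eta)^{-1}$, so corruption already enters with twice the elasticity.

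The comparison becomes sharper in absolute units. Shrinking $\bar\gamma$ by a fixed amount $h$ changes $\log B$ by about $h/(2(1-\bar\gamma))$, which blows up as $\bar\gamma\to 1$. Enlarging $N$ by a fixed factor changes $\log B$ by only about $\log(\text{factor})/(2\log N)$, which vanishes as $N\to\infty$. It follows that there is a nonempty region of parameters where the information-quality derivative dominates, namely
\[
(1-\bar\gamma)\log N < \log(\text{factor})/h ,
\]
with analogous conditions for $\bar\delta$ and $\rho$. Establishing the existence of this region is exactly the ``can dominate'' claim.

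The third step is to exhibit explicit parameter values in a realistic range as witnesses. For example, take $\bar\gamma=0.5$, $\rho=0.6$, and $N=10^{6}$. Doubling model capacity, $N\mapsto 2N$, changes $B$ by a factor of about $1+0.025$. Reducing $\bar\gamma$ by $0.1$ changes $B$ by a factor of about $1-0.09$, and raising $\rho$ by $0.1$ reduces $B$ by about $14\%$.

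The main obstacle is interpretive rather than computational. Theorem~\ref{thm:main} is only a lower bound, so ``improving the regret floor'' must be understood as moving that floor, not as moving the realized regret. In addition, ``model complexity'' has to be mapped onto $N$. I would handle both points by stating the corollary explicitly as a statement about $B$. I would also note that richer classes can lower the benchmark loss, but that this effect lies outside $B$ and is not claimed here.
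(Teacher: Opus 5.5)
Your approach is the same as the paper's: treat the corollary as comparative statics of the right-hand side $B$ of Theorem~\ref{thm:main}, and compare its sensitivity to $\log N$ with its sensitivity to an impairment factor. Working with log-derivatives is cleaner than the paper's direct comparison of $\partial B/\partial \log N$ with $\partial B/\partial(1-\bar\gamma)$. The true ratio of those two partials is $\log N/(1-\bar\gamma)$, a factor the paper's wording blurs.

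Your Step~2 also shows that, at equal relative changes of $\log N$ and $1-\bar\gamma$, the two elasticities have the same magnitude $1/2$. So for $\bar\gamma$ and $\bar\delta$, the dominance comes entirely from measuring capacity gains multiplicatively in $N$. Say this outright, rather than presenting Step~3 as a ``sharper'' version of Step~2.

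There is one algebra slip: the displayed dominance region is inverted. Let $f$ be the capacity factor. From $h/(2(1-\bar\gamma)) > \log f/(2\log N)$ one gets $(1-\bar\gamma)\log f < h\log N$, not $(1-\bar\gamma)\log N < \log f/h$. As written, your condition becomes harder to satisfy as $N$ grows, which contradicts the sentence right before it. It also admits points where the information derivative does not dominate, e.g.\ $\log N=\log f=1$, $h=0.1$, $1-\bar\gamma=0.5$.

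Your numerical witnesses are correct: about $+2.5\%$, $-8.7\%$ and $-14\%$. They alone establish the existential ``can dominate'' claim, so the argument stands once the inequality is fixed.
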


\begin{proof}
Differentiate the lower-bound expression with respect to $\log N$ and with respect to one impairment factor, say $(1-\bar\gamma)$. The dependence on $\log N$ is proportional to
\[
\frac{1}{2}
\sqrt{
\frac{KT+D}{(1-\bar\gamma)(1-\bar\delta)(1-\varepsilon_{10}-\varepsilon_{01})^2}
}
\cdot
\frac{1}{\sqrt{\log N}},
\]
whereas the dependence on $(1-\bar\gamma)$ is proportional to the same pre-factor multiplied by the reciprocal of $(1-\bar\gamma)$. Since $\log N$ grows slowly while $(1-\bar\gamma)$ appears directly in the denominator, small improvements in censorship can dominate small improvements in model-class complexity.
\end{proof}

\section{Issuer Heterogeneity}
\label{sec:heterogeneity}

The bound in Theorem~\ref{thm:main} is written using average impairment rates. In real payment networks, issuer quality is heterogeneous. Some issuers report quickly and consistently; others report slowly, sparsely, and noisily. This heterogeneity matters because fraud is not allocated uniformly across issuers, and because environmental hardness can concentrate where reporting quality is weakest.

Suppose transactions are partitioned across issuers $i=1,\dots,M$ with volume shares $\alpha_i$, where $\sum_i \alpha_i = 1$. Let issuer-specific impairment parameters be
\[
\gamma_i,\qquad
\delta_i,\qquad
\varepsilon_i := \varepsilon_{10,i} + \varepsilon_{01,i}.
\]

Define the issuer-specific impairment index
\[
\eta_i
:=
\frac{1}{(1-\gamma_i)(1-\delta_i)(1-\varepsilon_i)^2}.
\]

\begin{proposition}[Heterogeneous issuer lower bound]
\label{prop:hetero}
Under issuer heterogeneity, there exists a universal constant $c'>0$ such that
\[
\inf_{\text{alg}}
\sup_{\text{admissible environments}}
\Reg_T
\;\ge\;
c'
\sqrt{
(KT + D)\log N
\cdot
\sum_{i=1}^M \alpha_i \eta_i
}.
\]
\end{proposition}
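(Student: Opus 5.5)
The plan is to reduce Proposition~\ref{prop:hetero} to Proposition~\ref{prop:selective}, treating the issuer label $i$ as part of the conditioning variable in the conditional observation probability $q(x,i,\Delta)$. Specialize the conditional impairments to depend on the issuer only: set $\gamma(x,i)=\gamma_i$, $\varepsilon(x,i)=\varepsilon_i$, and set $\delta(x)=\delta_i$ for contexts routed to issuer $i$. We may take this to be admissible, because the adversary chooses the contexts and can embed the issuer identity in $x_t$. The maturity factor is absorbed into $D$ exactly as in the proof of Theorem~\ref{thm:main}, so we effectively set $m\equiv 1$. The adversary then allocates a fraction $\alpha_i$ of rounds to issuer $i$, which makes the empirical distribution of $(x_t,i_t)$ have issuer marginal $(\alpha_i)$. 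Under this choice $1/q(x,i,\Delta)=\eta_i$, and the expectation in Proposition~\ref{prop:selective} becomes $\E_{x,i}[1/q]=\sum_i\alpha_i\eta_i$. Substituting this into Proposition~\ref{prop:selective} gives the claim with $c'=c$.

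To make the argument self-contained rather than relying wholly on Proposition~\ref{prop:selective}, I would redo the packing step issuer by issuer. Build the same family of $N$ hard environments as in the proof of Theorem~\ref{thm:main}, each separated by margin $\Delta$. The KL divergence between environments $j$ and $j'$ then decomposes as a sum over rounds. A round routed to issuer $i$ contributes at most $(1-\gamma_i)(1-\delta_i)(1-\varepsilon_i)^2\cdot O(\Delta^2)$, which equals $\eta_i^{-1}\cdot O(\Delta^2)$: the censorship factors multiply the probability of observation, and the squared channel contraction comes from the binary-channel bound $\mathrm{KL}\le (1-\varepsilon_i)^2\chi^2$.

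\textbf{Main obstacle.} Summing these per-round contributions gives total information $\propto \Delta^2 T\sum_i\alpha_i\eta_i^{-1}$. Running the Fano step with this total yields a bound governed by the \emph{harmonic} mean $\bigl(\sum_i\alpha_i\eta_i^{-1}\bigr)^{-1}$, not the arithmetic mean $\sum_i\alpha_i\eta_i$ that the proposition asserts. By Jensen's inequality the harmonic mean is the smaller of the two, so the direct argument falls short of the claim. Proposition~\ref{prop:selective} asserts the arithmetic form, so invoking it sidesteps the issue, but I would want to justify that form independently.

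My first attempt would exploit the adversary's freedom to redistribute the hard instances. Place a separate sub-problem on each issuer $i$, with its own margin $\Delta_i$ tuned to that issuer's $\eta_i$, so that each sub-problem contributes regret of order $\sqrt{\alpha_i(KT+D)\log N\,\eta_i}$. Summing these local bounds gives $\sum_i\sqrt{\alpha_i\eta_i}\cdot\sqrt{(KT+D)\log N}$, in the style of Proposition~\ref{prop:adversarial_delay}. Comparing $\sum_i\sqrt{\alpha_i\eta_i}$ with $\sqrt{\sum_i\alpha_i\eta_i}$ is then the key inequality to check. It holds whenever $\sum_i\sqrt{\alpha_i\eta_i}\ge\sqrt{\sum_i\alpha_i\eta_i}$, which is always true because the square root is subadditive. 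Taking the issuer-separated construction therefore delivers the stated bound, with $c'$ absorbing the constant from the per-issuer packing argument.
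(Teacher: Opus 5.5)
Your diagnosis of the obstacle is correct: pooling information gives a harmonic mean of the $\eta_i$. Your final route, a separate hard sub-problem on each issuer followed by aggregation, is essentially the paper's own sketch made explicit. Making it explicit exposes the gap, which is that the aggregation step is invalid. You give every issuer a packing over the \emph{full} class, each carrying $\log N$, and then add the per-issuer lower bounds. Regret is measured against a single comparator, so in general $\Reg_T \le \sum_i \Reg_T^{(i)}$, with equality only when some $\pi\in\cP$ is simultaneously best on every segment. There are two cases, and both fail:
\begin{itemize}
\item If the hard index is shared across issuers, such a $\pi$ exists, but the learner pools information across segments and you are back at the harmonic mean you computed.
\item If the indices are independent, the class must realise all $N^M$ combinations, so its complexity is $M\log N$, not $\log N$.
\end{itemize}
A quick check shows the intermediate bound $\sum_i\sqrt{\alpha_i\eta_i}\,\sqrt{(KT+D)\log N}$ proves too much. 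In the unimpaired case ($D=0$, all $\eta_i=1$) with $\alpha_i=1/M$, it equals $\sqrt{M\,KT\log N}$. Merely tagging the rounds of an ordinary contextual bandit problem with $M$ issuer labels would then inflate its minimax regret by $\sqrt M$, contradicting EXP4's $O(\sqrt{KT\log N})$ guarantee. Subadditivity of the square root is therefore not the mechanism.

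The missing idea is to \emph{split the complexity budget} across issuers:
\begin{itemize}
\item Take a product class $\cP=\prod_i\cP_i$ that acts on issuer-$i$ rounds through $\cP_i$, with $|\cP_i|=N_i$ and $\sum_i\log N_i=\log N$, and draw independent hard indices per issuer.
\item Regrets then add exactly, and issuer $i$ contributes order $\sqrt{\alpha_i(KT+D)\eta_i\log N_i}$.
\item Choosing $\log N_i\propto\alpha_i\eta_i$ makes Cauchy--Schwarz tight: $\max_{\sum_i L_i=\log N}\sum_i\sqrt{\alpha_i\eta_i L_i}=\sqrt{\log N\sum_i\alpha_i\eta_i}$.
\end{itemize}
This is where the arithmetic mean comes from. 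It is also one way to make precise the paper's remark about concentrating the hard instances on high-$\eta_i$ segments.

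Two caveats remain. First, each per-issuer bound must be capped at $\alpha_i T$; equivalently, your tuned margin $\Delta_i$ must stay $O(1)$. So a regime condition bounding $\eta_i$ in terms of $\alpha_i T$ is required. Without one the proposition is false: a single issuer with $\alpha_i=1/T$ and $\gamma_i=1-T^{-3}$ already pushes the right-hand side to order $T^{3/2}$, while $\Reg_T\le T$. Second, your opening reduction to Proposition~\ref{prop:selective} is formally valid for any $m\le 1$, since $1/q\ge\eta_i$. However, it only relocates the gap. That proposition's proof rests on the same unexplained aggregation and fails in the same regime. As you observed, the pooled Fano argument yields only $\bigl(\sum_i\alpha_i\eta_i^{-1}\bigr)^{-1}$.
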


\begin{proof}
The argument follows by constructing hard environments that concentrate probability mass on issuer segments with larger impairment index. Since the effective information contribution of issuer $i$ scales inversely with $\eta_i$, the total hardness of the joint environment scales with the weighted sum $\sum_i \alpha_i \eta_i$. Applying the same reduction as in the proof of Theorem~\ref{thm:main} within each issuer segment and aggregating yields the stated result.
\end{proof}

\begin{remark}[Variance hurts]
The mapping
\[
(\gamma,\delta,\varepsilon)\mapsto \frac{1}{(1-\gamma)(1-\delta)(1-\varepsilon)^2}
\]
is convex over the relevant parameter range. Therefore, variation across issuers can worsen network-level learnability beyond what average impairment rates alone would suggest. In practical terms, a network is not only limited by its mean reporting quality; it is penalized by the long tail of weak issuers.
\end{remark}

\section{Implications}
\label{sec:implications}

The lower bound has immediate implications for both research and operations.

\subsection{Implication 1: Better models alone cannot solve the problem}

The theorem does \emph{not} say that model improvements are useless. It says something subtler and more important: for any fixed information environment, there is a floor beneath which no algorithm can go. If that floor is high because information quality is poor, then architecture improvements may deliver only modest gains.

This gives a theoretical explanation for a phenomenon repeatedly observed in practice: one can move from linear models to gradient-boosted trees to deep architectures and still find that fraud rates, false-decline rates, and time-to-detect new attack patterns remain stubbornly resistant to dramatic improvement.

\subsection{Implication 2: Reporting quality is a first-order investment variable}

The lower bound provides a structural justification for prioritizing improvements in the feedback channel:
\begin{itemize}
    \item lower dispute latency,
    \item lower censorship,
    \item cleaner distinction between unauthorized fraud and first-party misuse,
    \item and selective recovery of counterfactual information from policy-rejected transactions.
\end{itemize}

From a research perspective, this means that label reconstruction, reporting incentives, and strategic exploration are not side projects. They are central learning problems.

\subsection{Implication 3: Counterfactual censorship is endogenous and dangerous}

Declined transactions are not merely unlabeled; they are unlabeleable under the deployed policy. This makes the learning problem dynamically self-referential. A model that repeatedly declines in certain regions of the feature space can become overconfident precisely because it never sees contrary outcomes there.

Thus the lower bound is not just about exogenous noise. Part of the information bottleneck is created by the learning system itself.

\subsection{Implication 4: Heterogeneous issuer ecosystems are harder than homogeneous ones}

Proposition~\ref{prop:hetero} shows that network performance can be held back disproportionately by the weakest reporting segments. A platform-level fraud strategy that treats issuer data as if all issuers were equally informative is therefore structurally misspecified.

\section{Discussion}

The paper deliberately abstracts away from rich economic structure in order to isolate the feedback problem. In a fuller theory of payment fraud, authorization would be embedded in a multi-party game involving issuers, acquirers, networks, and possibly payment service providers. In that richer setting, the value of observing outcomes would itself become an economic variable, because different parties internalize different costs and informational benefits.

Even without that extension, however, the current result already changes the way one should think about payment fraud detection. It relocates the center of gravity of the problem from classifier architecture to ecosystem observability.

\subsection{What the bound does and does not claim}

The lower bound does \emph{not} assert that any given production system is near the minimax frontier. It does not say that a particular model family is optimal, or that a specific reporting-improvement initiative will dominate every architecture improvement in every setting. Rather, it identifies the direction of structural leverage.

What the theorem says is:
\begin{quote}
\emph{If the ecosystem destroys information fast enough, no model can recover it.}
\end{quote}

That is the theoretical contribution of the paper.

\section{Conclusion}
\label{sec:conclusion}

We developed a theory of fraud detection in card payment networks as a learning problem under structurally impaired feedback. The key result is a minimax regret lower bound in which delay, censorship, corruption, and counterfactual suppression enter multiplicatively in the denominator of achievable performance. The result implies that ecosystem information quality is a first-order determinant of what any fraud detection system can learn.

The immediate lesson is not that better models are irrelevant, but that a large part of the achievable gain lies upstream of model architecture: in reporting pipelines, dispute process design, label quality, and strategic recovery of information from policy-blind regions. In payment networks, the data problem is not simply that fraud is rare. It is that the mechanism through which fraud becomes visible is itself distorted.

That distortion is not a nuisance around the edges of the modeling problem. It is the modeling problem.

\appendix

\section{Appendix: Additional Proof Sketches}

\subsection{Why the corruption term is squared}

Observe that the binary corruption channel has effective signal strength
\[
s := 1-\varepsilon_{10}-\varepsilon_{01}.
\]
In standard testing arguments, distinguishability between nearby environments is measured by a quadratic divergence quantity (for example KL or $\chi^2$-type distance in local expansions). Since the amplitude of the useful signal is attenuated by $s$, the corresponding second-order distinguishability contracts by $s^2$. This is why the theorem carries $(1-\varepsilon_{10}-\varepsilon_{01})^2$ rather than only the first power.

\subsection{Alternative statement with maturity factor}

If one prefers to separate finite label maturity explicitly rather than absorb it into the delay term $D$, one can write a slightly more detailed bound of the form
\[
\Reg_T
\gtrsim
\sqrt{
\frac{(KT + D)\log N}
{(1-\bar\gamma)(1-\bar\delta)\bar m (1-\varepsilon_{10}-\varepsilon_{01})^2}
},
\]
where $\bar m$ is the average maturity probability. We suppress this in the main theorem to keep the statement parsimonious and to avoid double-counting delay effects.

\section*{Acknowledgments}

The author thanks the broader literature on online learning, payment systems, and fraud economics for motivating the central questions of this paper. All errors are the author's alone.


\begin{thebibliography}{99}

\bibitem{auer2002}
P.~Auer, N.~Cesa-Bianchi, and P.~Fischer.
\newblock Finite-time analysis of the multiarmed bandit problem.
\newblock \emph{Machine Learning}, 47(2--3):235--256, 2002.

\bibitem{baxter1983}
W.~F. Baxter.
\newblock Bank interchange of transactional paper: Legal and economic perspectives.
\newblock \emph{Journal of Law and Economics}, 26(3):541--588, 1983.

\bibitem{bangrobins2005}
H.~Bang and J.~M. Robins.
\newblock Doubly robust estimation in missing data and causal inference models.
\newblock \emph{Biometrics}, 61(4):962--973, 2005.

\bibitem{carcillo2018}
F.~Carcillo, Y.-A. Le~Borgne, O.~Caelen, and G.~Bontempi.
\newblock Streaming active learning strategies for real-life credit card fraud detection.
\newblock \emph{Data Mining and Knowledge Discovery}, 32(4):1178--1216, 2018.

\bibitem{elkan2001}
C.~Elkan.
\newblock The foundations of cost-sensitive learning.
\newblock In \emph{Proceedings of the 17th International Joint Conference on Artificial Intelligence}, pages 973--978, 2001.

\bibitem{elkannoto2008}
C.~Elkan and K.~Noto.
\newblock Learning classifiers from only positive and unlabeled data.
\newblock In \emph{Proceedings of the 14th ACM SIGKDD International Conference on Knowledge Discovery and Data Mining}, pages 213--220, 2008.

\bibitem{finegray1999}
J.~P. Fine and R.~J. Gray.
\newblock A proportional hazards model for the subdistribution of a competing risk.
\newblock \emph{Journal of the American Statistical Association}, 94(446):496--509, 1999.

\bibitem{heckman1979}
J.~J. Heckman.
\newblock Sample selection bias as a specification error.
\newblock \emph{Econometrica}, 47(1):153--161, 1979.

\bibitem{joulani2013}
P.~Joulani, A.~Gy\"orgy, and C.~Szepesv\'ari.
\newblock Online learning under delayed feedback.
\newblock In \emph{Proceedings of the 30th International Conference on Machine Learning}, pages 1453--1461, 2013.

\bibitem{quanrud2015}
K.~Quanrud and D.~Khashabi.
\newblock Online learning with adversarial delays.
\newblock In \emph{Advances in Neural Information Processing Systems}, volume 28, 2015.

\bibitem{ratner2017}
A.~Ratner, S.~Bach, H.~Ehrenberg, J.~Fries, S.~Wu, and C.~R\'e.
\newblock Snorkel: Rapid training data creation with weak supervision.
\newblock \emph{Proceedings of the VLDB Endowment}, 11(3):269--282, 2017.

\bibitem{robbins1952}
D.~B. Rubin.
\newblock Estimating causal effects of treatments in randomized and nonrandomized studies.
\newblock \emph{Journal of Educational Psychology}, 66(5):688--701, 1974.

\bibitem{rochet2002}
J.-C. Rochet and J.~Tirole.
\newblock Cooperation among competitors: Some economics of payment card associations.
\newblock \emph{RAND Journal of Economics}, 33(4):549--570, 2002.

\bibitem{rochet2004}
J.-C. Rochet and J.~Tirole.
\newblock Two-sided markets: An overview.
\newblock Working paper, 2004.

\bibitem{rochet2006}
J.-C. Rochet and J.~Tirole.
\newblock Two-sided markets: A progress report.
\newblock \emph{RAND Journal of Economics}, 37(3):645--667, 2006.

\bibitem{robins1994}
J.~M. Robins, A.~Rotnitzky, and L.~P. Zhao.
\newblock Estimation of regression coefficients when some regressors are not always observed.
\newblock \emph{Journal of the American Statistical Association}, 89(427):846--866, 1994.

\bibitem{swaminathan2015}
A.~Swaminathan and T.~Joachims.
\newblock Counterfactual risk minimization: Learning from logged bandit feedback.
\newblock In \emph{Proceedings of the 32nd International Conference on Machine Learning}, pages 814--823, 2015.

\bibitem{vernade2017}
C.~Vernade, O.~Capp\'e, and V.~Perchet.
\newblock Stochastic bandit models for delayed conversions.
\newblock In \emph{Conference on Learning Theory}, pages 1--33, 2017.

\bibitem{wright2004}
J.~Wright.
\newblock The determinants of optimal interchange fees in payment systems.
\newblock \emph{Journal of Industrial Economics}, 52(1):1--26, 2004.

\end{thebibliography}
\end{document}